\documentclass[runningheads]{llncs}

\usepackage{eccv}

\usepackage{float}

\usepackage{eccvabbrv}

\usepackage{graphicx}
\usepackage{booktabs}
\usepackage{wrapfig}
\usepackage[accsupp]{axessibility}  

\usepackage{enumitem}

\usepackage{hyperref}

\usepackage{orcidlink}

\begin{document}

\title{Author Guidelines for ECCV Submission} 

\title{InjectFlow: Weak Guides Strong via Orthogonal Injection for Flow Matching}
\author{Dayu Wang\inst{1} \and
Jiaye Yang\inst{1} \and
Weikang Li\inst{2} \and
Jiahui Liang\inst{1} \and
Yang Li\inst{1}}

\authorrunning{D.~Wang et al.}

\institute{Baidu Inc. \\
\email{2100010872@stu.pku.edu.cn, yamseyoung@gmail.com, \\ \{liangjiahui03, liyang164\}@baidu.com} \and
Peking University \\
\email{wavejkd@pku.edu.cn}}

\maketitle

\begin{abstract}
Flow Matching (FM) has recently emerged as a leading approach for high-fidelity visual generation, offering a robust continuous-time alternative to ordinary differential equation (ODE) based models. However, despite their success, FM models are highly sensitive to dataset biases, which cause severe semantic degradation when generating out-of-distribution or minority-class samples. In this paper, we provide a rigorous mathematical formalization of the ``Bias Manifold'' within the FM framework. We identify that this performance drop is driven by conditional expectation smoothing, a mechanism that inevitably leads to trajectory lock-in during inference. To resolve this, we introduce \textbf{InjectFlow}, a novel, training-free method by injecting orthogonal semantics during the initial velocity field computation, without requiring any changes to the random seeds. This design effectively prevents the latent drift toward majority modes while maintaining high generative quality. Extensive experiments demonstrate the effectiveness of our approach. Notably, on the GenEval dataset, InjectFlow successfully fixes 75\% of the prompts that standard flow matching models fail to generate correctly. Ultimately, our theoretical analysis and algorithm provide a ready-to-use solution for building more fair and robust visual foundation models.


\keywords{Inference-Time Scaling \and Flow Matching \and Orthogonal Exploration \and SDE Fission \and Visual Foundation Models}
\end{abstract}

\section{Introduction}
In recent years, the landscape of artificial intelligence has witnessed a paradigm shift towards generative modeling, fundamentally altering the capabilities of automated systems to synthesize complex, high-dimensional data such as photorealistic images, audio, and video\cite{ho2020denoising,rombach2022high,blattmann2023align,kreuk2022audiogen}. At the mathematical core of systems like Stable Diffusion 3 \cite{podell2023sdxlimprovinglatentdiffusion} and state-of-the-art video generation models \cite{blattmann2023stable, bar2024lumiere,brooks2024video} lies the simulation of ordinary and stochastic differential equations (ODEs/SDEs)\cite{esser2024scaling}. While early breakthroughs relied heavily on score-based denoising diffusion probabilistic models (DDPMs)\cite{ho2020denoising, song2020score}, the field is rapidly converging upon Flow Matching as a more generalized, simulation-free approach for training Continuous Normalizing Flows (CNFs) at an unprecedented scale \cite{lipman2022flow,albergo2022building,esser2024scaling,blackforest2024flux}. Flow Matching operates by regressing vector fields of fixed conditional probability paths, transitioning away from the traditional Markov-chain or SDE perspectives toward a purely deterministic mapping\cite{albergo2022building}. This deterministic ODE simulation subsumes hyper-parameters and parametric scores into a unified parametric dynamics framework, governing the continuous evolution from noise to data\cite{karras2022elucidating,lu2022dpm,chen2018neural}.

However, this shift toward deterministic ODEs exposes a critical vulnerability when models are trained on real-world datasets characterized by severe class imbalances and spurious correlations. In classification and non-autoregressive discrete models, such biases manifest as "probability collapse," wherein the predictive distribution sharply concentrates its mass on the majority class or the spurious correlation, ignoring intrinsic, causal features\cite{song2025reweighted,seshadri2024bias,yang2025escaping}. In the visual generation domain, particularly within Flow Matching, this discrete collapse translates into a continuous geometric phenomenon. The analysis indicates that the generative trajectory becomes trapped within a localized subspace defined by the training data's dominant biases\cite{chen2025geometric}.

Notably, similar collapse phenomena have also been observed in large language models (LLMs), where the model’s predictions tend to concentrate on dominant patterns present in the training data. Prior work has shown that injecting orthogonal semantic signals can effectively mitigate such collapse\cite{wang2026studentguidesteacherweaktostrong,wang2026alignmentexpandingreasoningcapacity}. Building upon this insight, we extend the orthogonal injection strategy to the Flow Matching framework, demonstrating that it can likewise alleviate trajectory collapse in generative flows.As illustrated in Figure~\ref{fig:qualitative_comparison}, this approach effectively rectifies the erroneous generation trajectories commonly observed in standard Stable Diffusion models.

Our approach dynamically corrects the generation path through three primary stages:

\begin{figure}[h]
    \centering
    \includegraphics[width=0.9\linewidth]{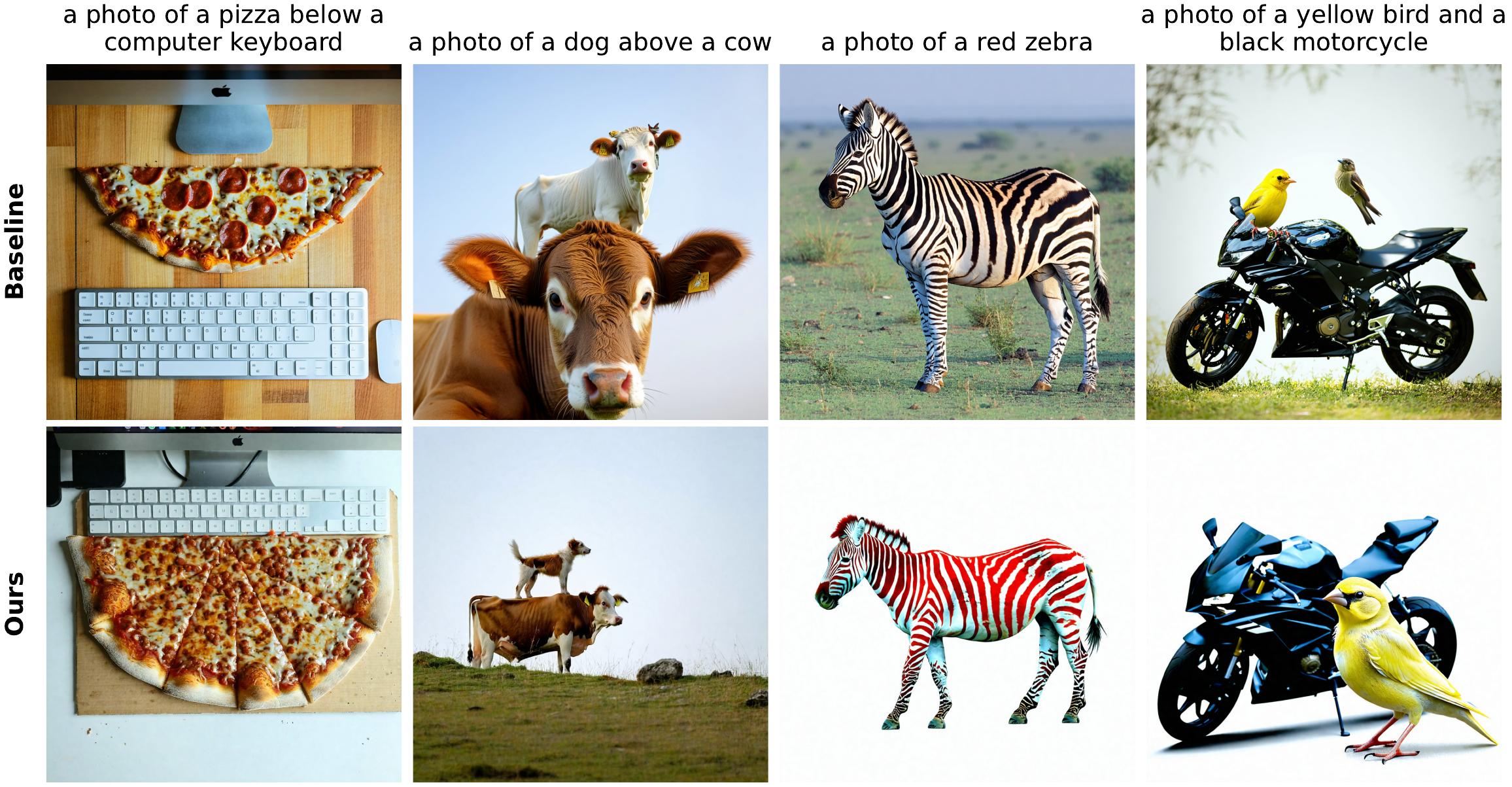}
    \caption{\textbf{Qualitative comparison of generation results.} While the baseline model often succumbs to trajectory collapse, failing to capture specific attribute bindings or complex spatial relationships, our proposed method effectively rectifies these erroneous trajectories, ensuring accurate semantic alignment with the given prompts.}    \label{fig:qualitative_comparison}
\end{figure}

\begin{itemize}
    \item \textbf{Bias Manifold Identification:} We first simulate initial generation trajectories via standard deterministic Ordinary Differential Equation (ODE) solvers. By systematically evaluating and filtering out the erroneous or collapsed paths, we explicitly delineate the underlying bias manifold that traps the model.
    
    \item \textbf{Orthogonal Semantic Injection:} Subsequently, we leverage a weaker, auxiliary model coupled with a dynamic prompt-masking strategy to conduct orthogonal sampling. This process computes a velocity component that is geometrically orthogonal to the primary model's biased vector field, extracting the optimal orthogonal semantic direction for injection.
    
    \item \textbf{SDE-Driven Trajectory Fission:} Finally, conditioned on this orthogonal injection, we transition the sampling process from a deterministic ODE to a Stochastic Differential Equation (SDE). This SDE-driven exploration forces the trajectory to break out of the localized bias subspace, yielding completely novel, diverse, and unbiased generation results.
\end{itemize}

In summary, our key contributions are as follows:

\begin{itemize}
    \item \textbf{Identification of Geometric Collapse:} We explicitly characterize the trajectory collapse phenomenon in continuous normalizing flows, demonstrating how deterministic ODE simulations become confined within localized bias manifolds due to spurious correlations in the training data.
    
    \item \textbf{Weak-to-Strong Orthogonal Sampling:} We introduce a novel test-time intervention that utilizes a weaker companion model and prompt-masking to calculate and inject orthogonal semantic vectors, successfully steering the generation process away from dominant biases without requiring model fine-tuning.
    
    \item \textbf{Hybrid ODE-to-SDE Exploration:} We propose a unified sampling pipeline that seamlessly interrupts the deterministic ODE path with an orthogonal injection, followed by SDE sampling. This strategic shift broadens trajectory exploration, significantly enhancing both the diversity and fidelity of the generated outputs.
\end{itemize}

\section{Related Work}
\textbf{Continuous-Time Generative Models.} The landscape of generative modeling has recently transitioned from stochastic score-based diffusion models \cite{ho2022classifier} to continuous-time deterministic frameworks. Flow Matching \cite{lipman2022flow} introduces a simulation-free approach for training Continuous Normalizing Flows (CNFs) using Optimal Transport paths, significantly improving inference efficiency. Concurrently, Rectified Flows \cite{liu2022flow} iteratively straighten probability paths to enable high-quality generation with minimal integration steps, while Stochastic Interpolants \cite{albergo2022building} provide a unified mathematical framework bridging flows and diffusions. These foundational continuous-time models have been successfully scaled using Multimodal Diffusion Transformers (MMDiT) to achieve state-of-the-art high-resolution image synthesis \cite{esser2024scaling}.

\textbf{Probability Collapse and Trajectory Lock-in.} Despite their computational efficiency, deterministic flow models suffer from continuous-time probability collapse and trajectory lock-in, where models ignore minority semantics and regress into a low-rank bias manifold, a failure mode systematically evaluated by object-focused frameworks like GenEval \cite{ghosh2023geneval}. This lock-in phenomenon, analogous to the cascading hallucinations observed in discrete masked diffusion models \cite{saini2026tabes}, traps the generative ODE trajectory early in the integration process.

\textbf{Inference-Time Interventions.} To bypass this topological lock-in, recent literature focuses on inference-time scaling and orthogonal interventions. SDE Fission and Variance Preserving interpolants reintroduce necessary stochasticity, enabling particle sampling and test-time compute exploration for flow models \cite{kim2025inference}. This stochastic conversion natively supports online reinforcement learning techniques, such as Flow-GRPO, which drastically improves compositional alignment without reward hacking \cite{liu2025flow}. Finally, orthogonal semantic injection leverages heterogeneous weak-to-strong generalization paradigms \cite{burns2023weak} to compute and inject orthogonal semantic residuals at the initial boundary condition, forcefully expanding the tangent space rank and steering the trajectory away from the bias manifold.

\section{Motivation}
\label{sec:motivation}

Recent advancements have established Flow Matching (FM) as the premier paradigm for high-fidelity visual generation. However, despite their empirical success, flow-based models remain highly susceptible to dataset biases, resulting in degraded generation of out-of-distribution or minority-class semantics. 

To understand why models inherently collapse when faced with complex compositional prompts, we must examine the regression objective. Under the Minimum Mean Square Error (MMSE) criterion, the optimal parametric vector field $v_{\theta}^{*}$ learned by FM is the conditional expectation of the target velocity field:
\begin{equation}
v_{\theta}^{*}(x_{t},t,c)=\mathbb{E}_{q(x_{1}|x_{t},c)}[u_{t}(x_{t}|x_{0},x_{1})]
\end{equation}
The target posterior distribution is extremely sparse for complex compositional prompts, therefore the expectation operator acts as a low-pass filter. This mechanism, 
averages out high-frequency complex features and regresses the velocity field entirely into a low-rank subspace spanned by the training set's dominant biases, formally defined herein as the Bias Manifold $\mathcal{M}_{t}$.

To empirically validate this geometric collapse, we designed a novel inference-time latent volume tracking experiment. For a given complex prompt, we initialize 8 independent latent trajectories ($N=8$) driven by the same deterministic ODE. To quantify the dimensional richness of the local manifold, we compute the 7-dimensional volume of the parallelotope spanned by these 8 latent states at each integration step. By designating the 8th trajectory as the local origin, we construct the directional matrix $V \in \mathbb{R}^{7 \times D}$ and strictly compute the volume via the Gram Matrix determinant:
\begin{equation}
Vol = \sqrt{\det(V V^T)}
\end{equation}

\begin{figure}
    \centering
    \includegraphics[width=0.75\linewidth]{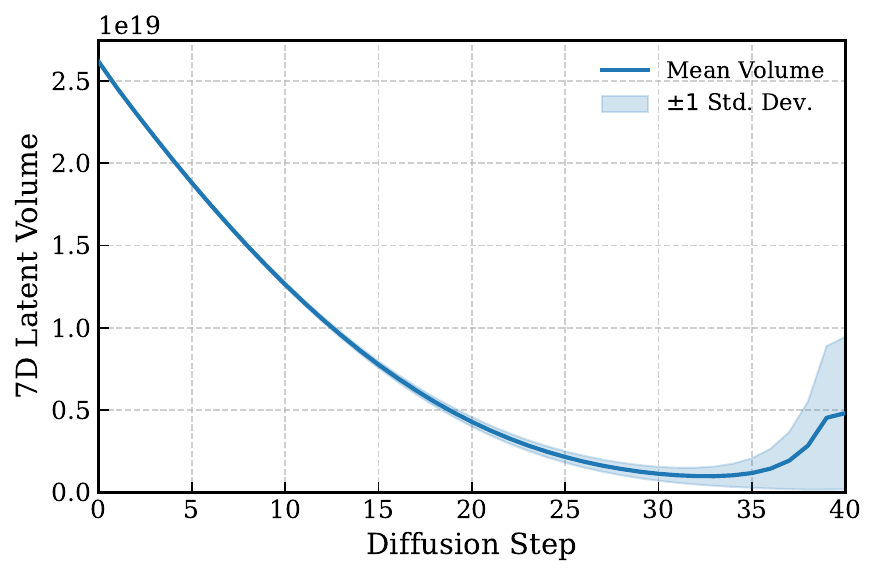}
    \caption{Empirical observation of latent volume collapse across ODE integration steps.}
    \label{fig:volume_eccv}
\end{figure}
The experimental tracking reveals a severe and premature collapse of the latent volume during the earliest integration steps, as shown in Table \ref{fig:volume_eccv}. This empirical observation confirms the phenomenon of Trajectory Lock-in. As Optimal Transport (OT) Flow Matching is explicitly designed to learn straight paths, the initial velocity field $v_{\theta}(x_{0},0,c)$ heavily dictates the global tangent direction. 

Subsequent ODE integration steps merely render details within this erroneous bias manifold, and standard interventions at later steps cannot recover the missing objects because the trajectory has already lost all orthogonal dimensions. This geometric trap fundamentally motivates our proposed approach: rescuing the generation process necessitates a forced Initial Condition Perturbation. By injecting orthogonal semantics during the initial velocity computation, we can mathematically guarantee the trajectory's divergence from the topological basin of attraction, bypassing the probability collapse.

\subsection{Step 1: Bias Manifold Identification}

To understand why Flow Matching models inherently collapse into the Bias Manifold $\mathcal{M}_t$ when faced with complex compositional prompts, we examine the regression objective. Under the Minimum Mean Square Error (MMSE) criterion, the optimal parametric vector field $v_\theta^*$ learned by Flow Matching is the conditional expectation of the target velocity field:
$$v_\theta^*(x_t, t, c) = \mathbb{E}_{q(x_1 | x_t, c)} \left[ u_t(x_t | x_0, x_1) \right]$$
Because the target posterior distribution is extremely sparse for complex compositional prompts, the expectation operator acts as a low-pass filter. It averages out high-frequency complex features and regresses the velocity field entirely into a low-rank subspace spanned by the training set's dominant biases. To explicitly delineate this underlying Bias Manifold $\mathcal{M}_t$ that traps the model, we simulate initial generation trajectories via standard deterministic Ordinary Differential Equation (ODE) solvers. By systematically evaluating and filtering out the erroneous or collapsed paths, we identify the localized subspace driving the geometric collapse.

\subsection{Step 2: Orthogonal Semantic Injection}

The generative trajectory integration in Optimal Transport (OT) Flow Matching is explicitly designed to learn straight paths, where the initial velocity field $v_\theta(x_0, 0, c)$ heavily dictates the global tangent direction. As established, when the conditional expectation forces this initial tangent into the Bias Manifold, we observe the phenomenon of Trajectory Lock-in. Subsequent ODE integration steps merely render details within this erroneous bias manifold, as the trajectory has already lost all orthogonal dimensions. 

As shown in Figure \ref{fig:method_ECCV}, to rescue the generation process, we leverage a weaker, auxiliary model coupled with a dynamic prompt-masking strategy to conduct orthogonal sampling. This process computes a velocity component that is geometrically orthogonal to the primary model's biased vector field, extracting the optimal orthogonal semantic direction for injection and effectively steering the generation process away from dominant biases.

\begin{wrapfigure}{r}{0.5\textwidth}
    \centering
    \includegraphics[width=0.48\textwidth]{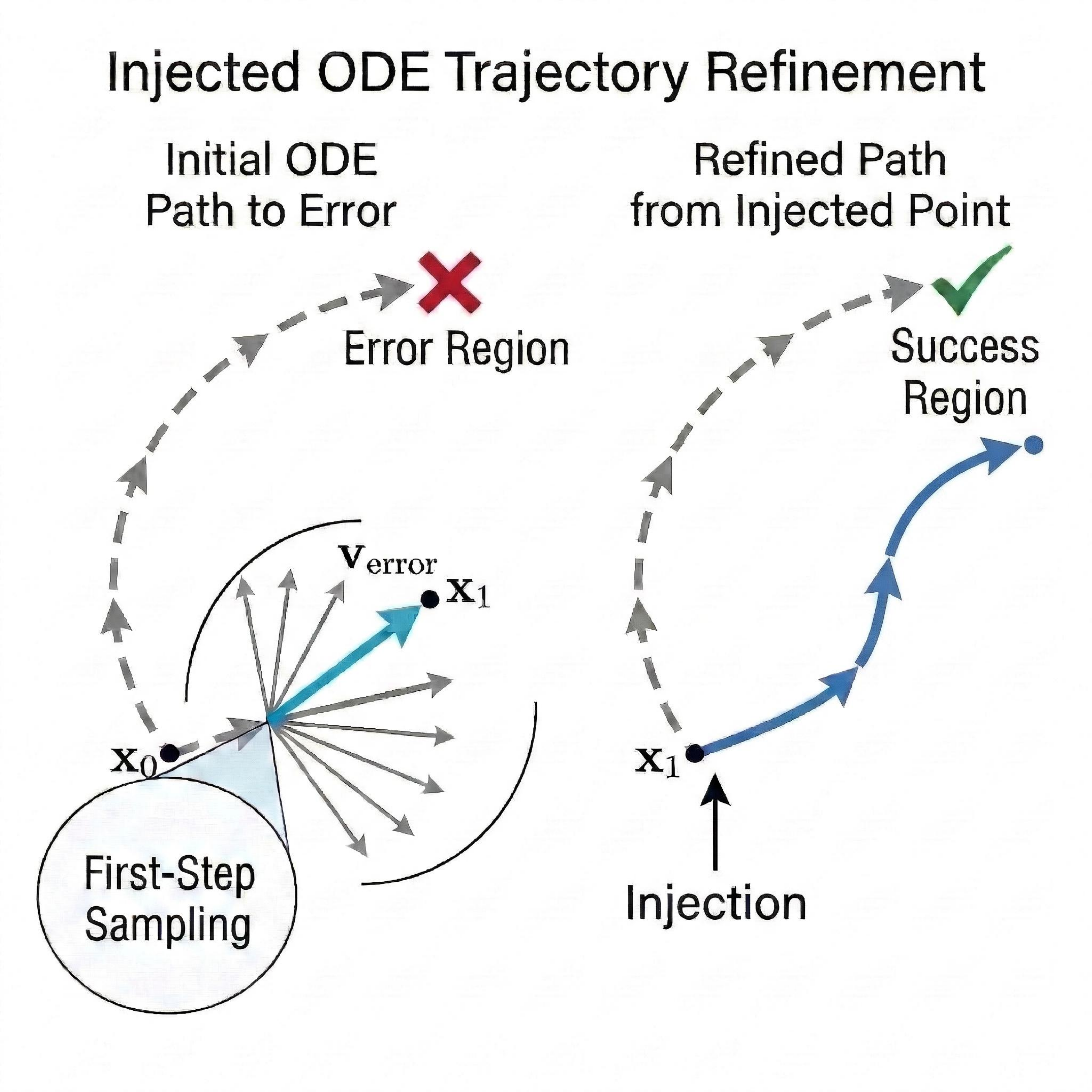} 
    \caption{Our Method}
    \label{fig:method_ECCV}
\end{wrapfigure}

\subsection{Step 3: SDE-Driven Trajectory Fission}

To bypass this geometric trap, rescuing the generation process necessitates a forced Initial Condition Perturbation. During the initial integration steps, we inject the extracted orthogonal semantic direction into the primary model's vector field:
$$v_{new} = v_{primary} + \alpha v_{\perp}$$
where $v_{primary}$ represents the primary model's biased vector field and $v_{\perp}$ is the geometrically orthogonal velocity component extracted via the auxiliary model. 

Finally, conditioned on this orthogonal injection, we transition the sampling process from a deterministic ODE to a Stochastic Differential Equation (SDE). This SDE-driven exploration forces the trajectory to break out of the localized bias subspace. By seamlessly interrupting the deterministic ODE path with an orthogonal injection followed by SDE sampling, we mathematically guarantee the trajectory's divergence from the topological basin of attraction, yielding completely novel, diverse, and unbiased generation results.

\section{Experiments}
\subsection{Settings}

\textbf{Data.} GenEval \cite{ghosh2023geneval} is used in our experiments.

\textbf{Models.} To validate our theoretical framework, we evaluated our method on standard compositional generation benchmarks using the Stable Diffusion 3.5 architecture\cite{esser2024scaling}. Under the weak-strong setting, we employed Stable Diffusion 3.5 Large (8 billion parameters) as the primary model for the base velocity field, and Stable Diffusion 3.5 Medium (2.5 billion parameters) as the weak model to compute the orthogonal residual.

\textbf{Implementation Details.} Our pipeline consists of the following steps: (i) The forward pass of the MMDiT backbone is dynamically intercepted at the initial high-noise step ($start\_step = 1$). (ii) When performing orthogonal injection, we mask specific token indices in the pooled projections to calculate $v_{\perp}$, injecting it across the conditional batch with a scaling factor of $\alpha = 1.0$. (iii) After the Step 1 deterministic injection, we apply Flow-GRPO\cite{liu2025flow} inspired SDE fission for the remaining integration steps, utilizing a noise level of $a = 0.7$ to explore the newly unlocked high-dimensional space.


\section{Main Results}
We evaluate the capability of our method to correct erroneous samples generated by the baseline ODE method on the GenEval benchmark. As detailed in Table \ref{tab:geneval_correction}, our approach demonstrates strong performance across multiple compositional tasks. Most notably, it achieves a 100\% correction rate for the Two Objects task and successfully fixes 81.82\% and 75.00\% of the baseline's failures in the Counting and Color \& Attr tasks, respectively.
\begin{table}[H]
  \centering
  \setlength{\tabcolsep}{6pt}
  \caption{Correction success rate of our method on the GenEval benchmark compared to baseline ODE failures.}
  \label{tab:geneval_correction}
  \begin{tabular}{lccc}
    \toprule
    \textbf{Task} & \textbf{ODE Errors} & \textbf{Ours Corrected} & \textbf{Correction Rate (\%)} \\
    \midrule
    Position      & 80  & 34 & 42.50 \\
    Color \& Attr & 44  & 33 & 75.00 \\
    Two Objects   & 7   & 7  & 100.00 \\
    Colors        & 12  & 7  & 58.33 \\
    Counting      & 22  & 18 & 81.82 \\
    \bottomrule
  \end{tabular}
\end{table}

Qualitative comparisons further validate the efficacy of our orthogonal semantic injection. Figure \ref{fig:result_color_attr} illustrates how our method successfully disentangles and binds the correct colors to specific objects, resolving the semantic bleeding seen in the baseline. Furthermore, Figure \ref{fig:result_count} demonstrates our model's ability to generate the exact number of requested items, overcoming the baseline's tendency to drop or hallucinate entities. Finally, Figure \ref{fig:result_position} highlights our method's precision in adhering to complex spatial positioning constraints.

\begin{figure}[H]
    \centering
    \includegraphics[width=0.7\linewidth]{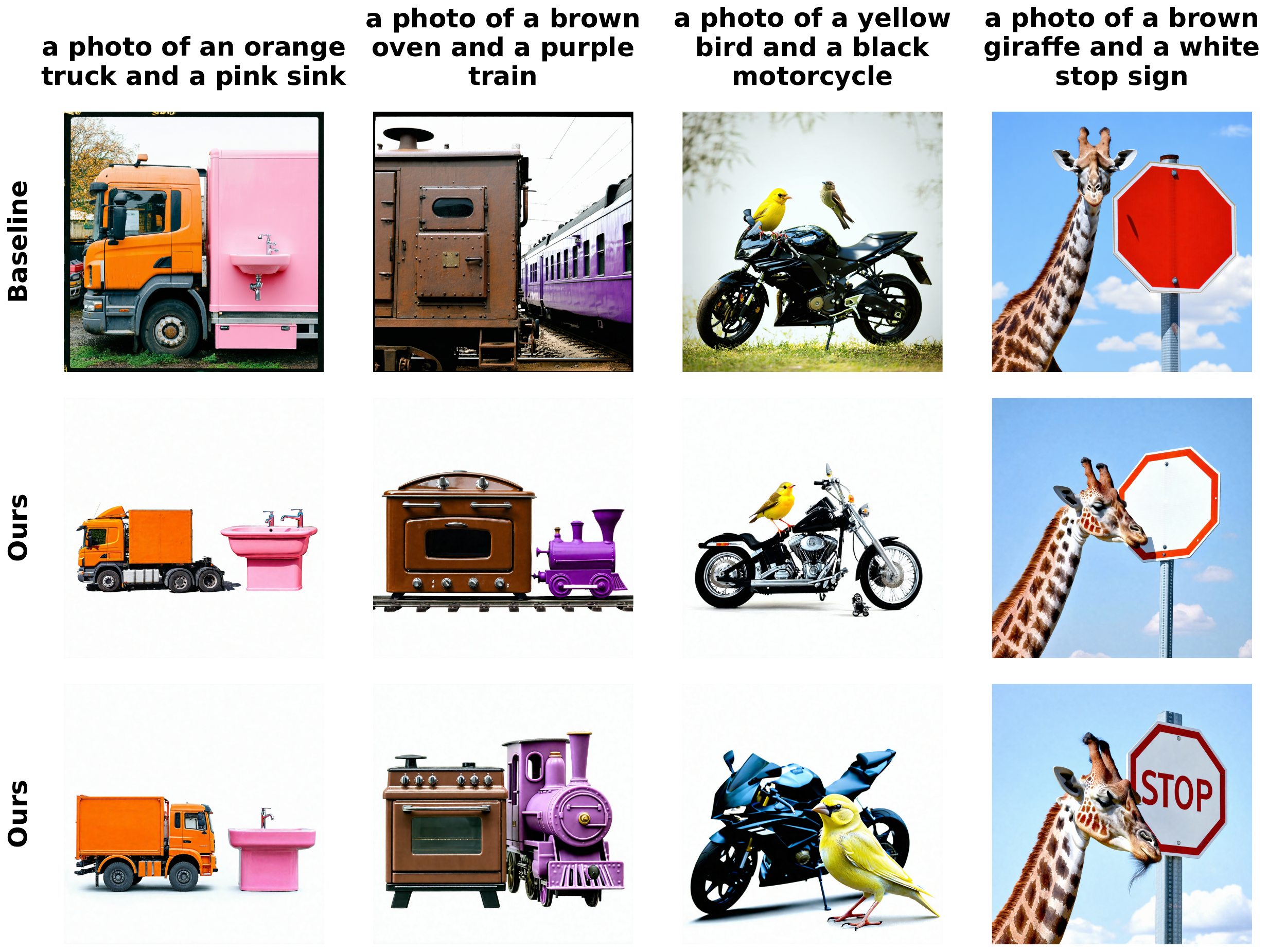}
    \caption{Qualitative comparison showing our method successfully correcting attribute binding and color bleeding issues.}
    \label{fig:result_color_attr}
\end{figure}

\begin{figure}[H]
    \centering
    \includegraphics[width=0.7\linewidth]{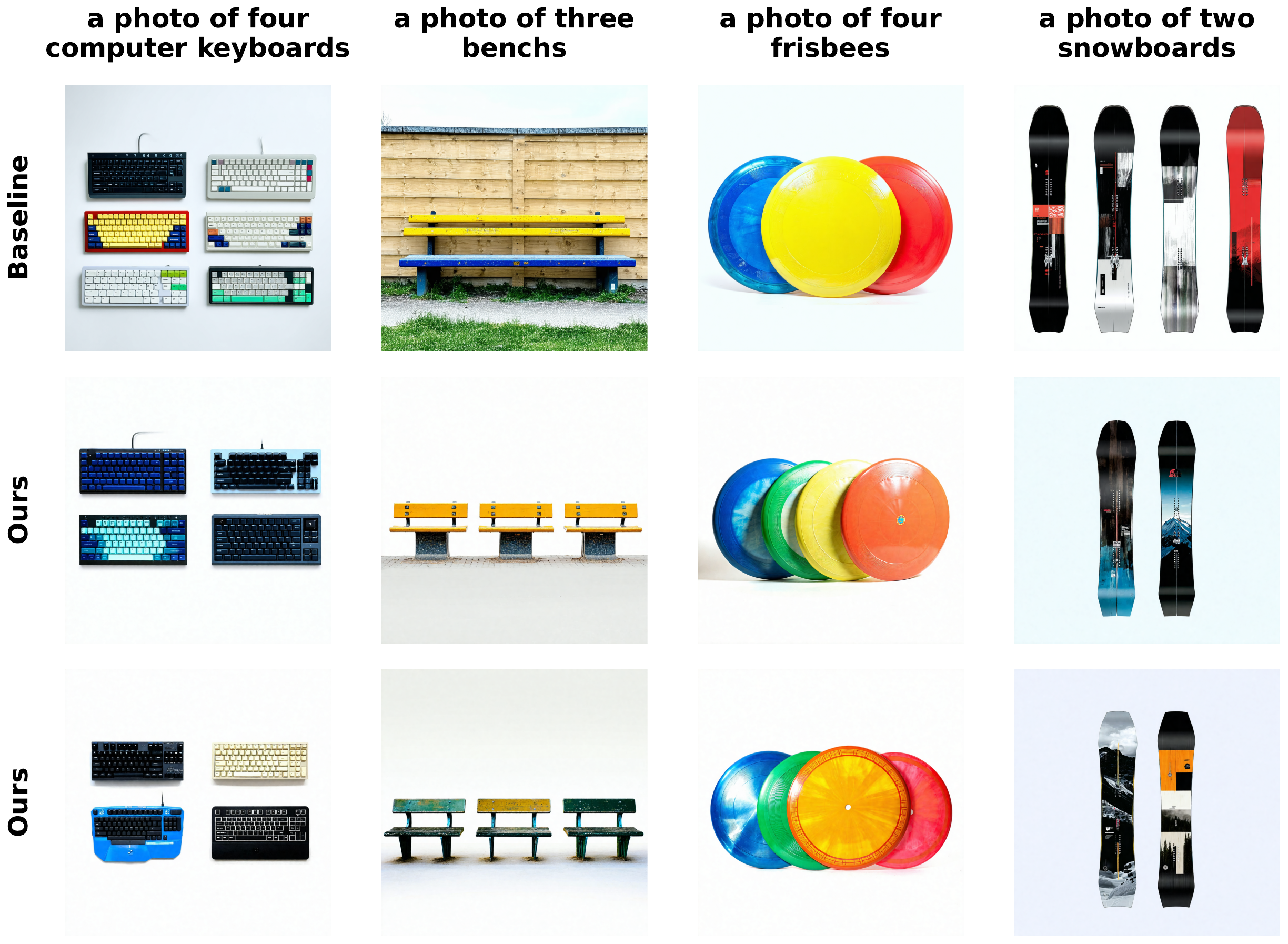}
    \caption{Qualitative comparison demonstrating accurate numerical generation for the counting task.}
    \label{fig:result_count}
\end{figure}

\begin{figure}[H]
    \centering
    \includegraphics[width=0.7\linewidth]{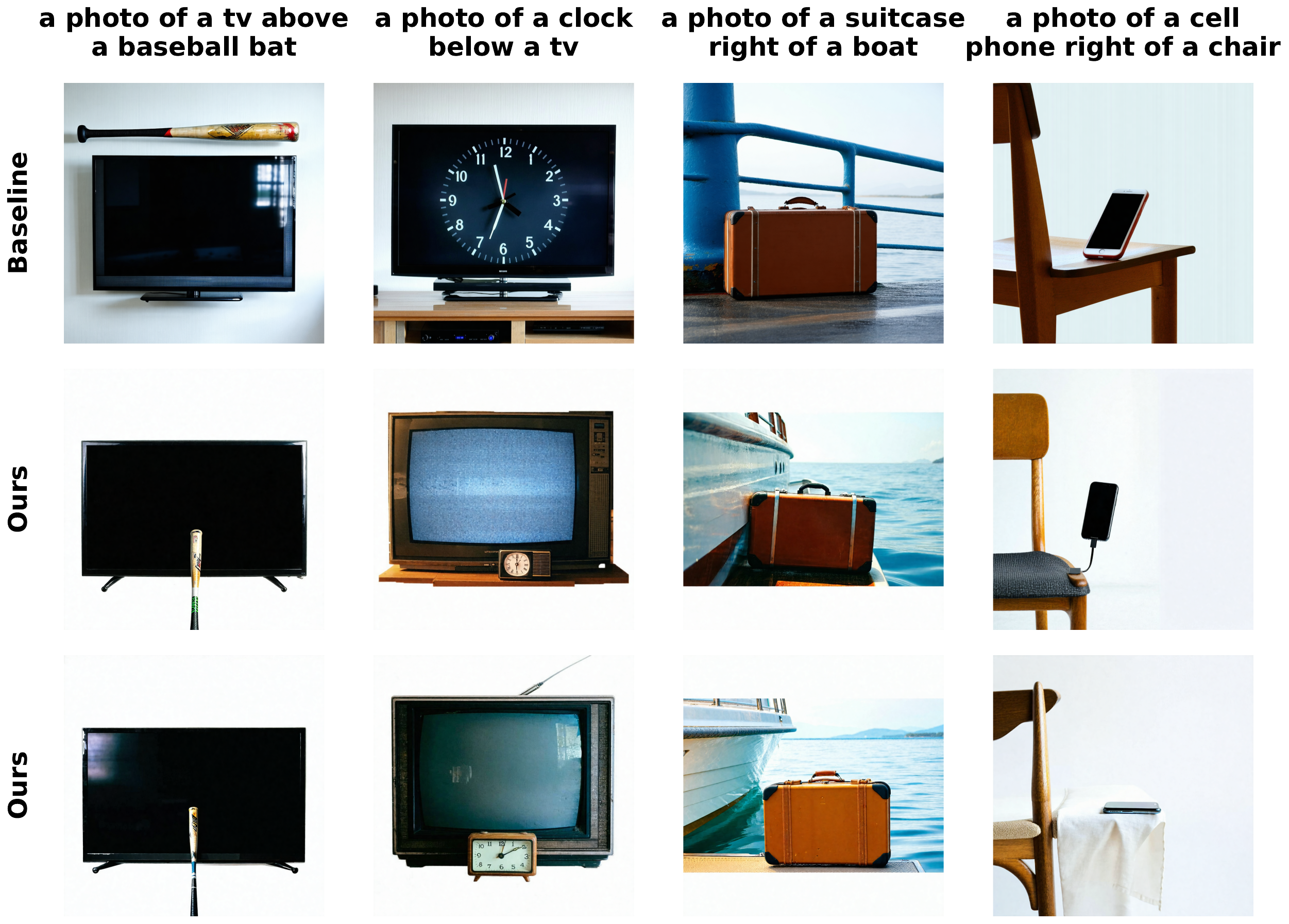}
    \caption{Qualitative comparison highlighting successful corrections of complex spatial relationships.}
    \label{fig:result_position}
\end{figure}

\subsection{Sensitivity Analysis and Ablation Study}
To evaluate the sensitivity of our approach to initial conditions, we assessed the correction performance across different random seeds. 

\begin{table}[H]
  \centering
  \setlength{\tabcolsep}{6pt}
  \caption{Correction success rate using Seed 2025.}
  \label{tab:seed_2025}
  \begin{tabular}{lccc}
    \toprule
    \textbf{Task} & \textbf{ODE Errors} & \textbf{Ours Corrected} & \textbf{Correction Rate (\%)} \\
    \midrule
    Position      & 74  & 25 & 33.78 \\
    Color \& Attr & 39  & 25 & 64.10 \\
    Two Objects   & 9   & 9  & 100.00 \\
    Colors        & 18  & 12 & 66.67 \\
    Counting      & 30  & 26 & 86.67 \\
    \bottomrule
  \end{tabular}
\end{table}

\begin{table}[H]
  \centering
  \setlength{\tabcolsep}{6pt}
  \caption{Correction success rate using Seed 2026.}
  \label{tab:seed_2026}
  \begin{tabular}{lccc}
    \toprule
    \textbf{Task} & \textbf{ODE Errors} & \textbf{Ours Corrected} & \textbf{Correction Rate (\%)} \\
    \midrule
    Position      & 77  & 32 & 41.56 \\
    Color \& Attr & 40  & 28 & 70.00 \\
    Two Objects   & 5   & 5  & 100.00 \\
    Colors        & 13  & 7  & 53.85 \\
    Counting      & 27  & 24 & 88.89 \\
    \bottomrule
  \end{tabular}
\end{table}
As demonstrated in Tables \ref{tab:seed_2025} and \ref{tab:seed_2026}, our method maintains consistent and high correction rates across all evaluated tasks, indicating strong robustness and stability against varying initial noise distributions.

Furthermore, we conducted an ablation study to evaluate the effectiveness and computational cost of our proposed method. Specifically, we compared our weak-to-strong injection approach against a setting that directly uses the primary model for sampling, keeping all other conditions identical. The results demonstrate that our method achieves a 7.14\% improvement in the overall generation score, while only incurring a marginal 3.7\% increase in inference time.

\section{Theoretical Analysis}

This section provides a rigorous mathematical formalization of the continuous-time probability collapse in Flow Matching (FM) models and derives the exact geometric and stochastic interventions (Orthogonal Injection and SDE Fission) required to bypass it.

\subsection{Conditional Expectation Smoothing and the Bias Manifold}

We first formalize the regression objective of Flow Matching models from a measure-theoretic perspective to explain the intrinsic dimensional collapse. Let $\mathcal{X} \subset \mathbb{R}^d$ be the state space, $p_0(x_0) = \mathcal{N}(0, I)$ the prior, $q_{data}(x_1|c)$ the conditional empirical data distribution, and $u_t(x_t|x_0, x_1)$ the target stochastic velocity field.

\begin{lemma}
Under the Minimum Mean Square Error (MMSE) criterion, the optimal parametric vector field $v_\theta^*(x_t, t, c)$ learned by Flow Matching is the conditional expectation of the target velocity field, which acts as a low-pass filter on the high-frequency compositional semantics.[1][2]
\end{lemma}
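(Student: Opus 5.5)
The lemma has two parts, and I will treat them separately. The first is a standard $L^2$ projection statement. The second, the "low-pass filter" claim, is informal, so I will first fix a precise meaning for it and then prove that meaning.

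\emph{Part one: the MMSE minimizer.} Write the conditional FM objective
\[
\mathcal{L}(v)=\mathbb{E}_{t,\,x_0\sim p_0,\,x_1\sim q_{data}(\cdot|c)}\bigl\|v(x_t,t,c)-u_t(x_t|x_0,x_1)\bigr\|^2 ,
\]
where $x_t=(1-t)x_0+tx_1$ is the OT path and $u_t=x_1-x_0$.

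For fixed $(t,c)$, disintegrate the joint law of $(x_0,x_1)$ along the map $(x_0,x_1)\mapsto x_t$. Add and subtract $m(x_t):=\mathbb{E}[u_t\mid x_t,c]$. The cross term
\[
\mathbb{E}\langle v-m,\;m-u_t\rangle
\]
vanishes by the tower property, because $v-m$ is $\sigma(x_t)$-measurable. This leaves
\[
\mathcal{L}(v)=\mathbb{E}\|v-m\|^2+\mathbb{E}\|m-u_t\|^2 ,
\]
and the second term does not depend on $v$. Hence the minimizer over all measurable $v$ is $v^*=m$, almost everywhere with respect to $p_t$. Assuming the parametric class is expressive enough, $v_\theta^*=m$.

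Since $x_0$ is determined by $x_t$ and $x_1$ for $t<1$, the expectation can be rewritten over the posterior $q(x_1|x_t,c)$. This is exactly the formula displayed in Section~\ref{sec:motivation}. The derivation is routine.

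\emph{Part two: making "low-pass filter" precise.} I plan to use the Gaussian structure explicitly. Since $x_0=(x_t-tx_1)/(1-t)$, we get
\[
u_t=\frac{x_1-x_t}{1-t},
\qquad
v^*(x_t)=\frac{\hat x_1(x_t)-x_t}{1-t},
\]
where $\hat x_1(x_t)=\mathbb{E}[x_1|x_t]$.

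The posterior satisfies
\[
q(x_1|x_t)\propto q_{data}(x_1|c)\,\mathcal{N}\bigl(x_t;\,tx_1,\,(1-t)^2I\bigr).
\]
Equivalently, $\hat x_1$ is the denoiser for $y=x_t/t$ observed under Gaussian noise of variance $\sigma^2=((1-t)/t)^2$. By Tweedie's formula,
\[
\hat x_1=y+\sigma^2\nabla\log (q_{data}*\mathcal{N}_{\sigma^2})(y).
\]
The map therefore depends on the data only through the Gaussian-smoothed density $q_{data}*\mathcal{N}_{\sigma^2}$. In the Fourier domain this smoothing multiplies $\hat q_{data}(\omega)$ by $e^{-\sigma^2\|\omega\|^2/2}$, which attenuates frequency $\omega$ exponentially; this is the low-pass statement.

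A cleaner formal version uses a mixture model. Let
\[
q_{data}=\sum_k\pi_k\,\delta_{\mu_k}
\quad\text{or}\quad
q_{data}=\sum_k\pi_k\,\mathcal{N}(\mu_k,\Sigma_k),
\]
with a majority mode $\pi_1\gg\pi_k$ for the minority compositional modes $k\ge2$. Then $\hat x_1=\sum_k w_k(x_t)\,\mu_k$, with softmax weights
\[
w_k\propto \pi_k\exp\!\bigl(-\|x_t-t\mu_k\|^2/2(1-t)^2\bigr).
\]
At small $t$ the exponent differences are $O\bigl(t/(1-t)^2\bigr)$. So $w_k\approx\pi_k$, and $v^*(x_0,0)\approx\sum_k\pi_k\mu_k-x_0$. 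This vector lies within $O(\sum_{k\ge2}\pi_k)$ of the majority direction $\mu_1-x_0$, which gives the concentration onto $\mathcal{M}_t$ that the lemma invokes.

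\emph{Main obstacle.} The hard step is the second part, because the statement is informal. I would commit to the precise version above: a Gaussian-convolution (heat-kernel) smoothing statement, plus the mixture-weight bound at small $t$. I would then record explicitly that "high-frequency compositional semantics" is being modeled as low-mass, well-separated modes. Under that modeling assumption the proof is a direct computation, and I would flag that assumption as the place where the formalization rests.
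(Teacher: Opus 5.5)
Your first half is the paper's argument, but you actually prove it. The paper cites ``the variational principle and MMSE estimation'' for $v_\theta^*=\mathbb{E}_{q(x_0,x_1|x_t,c)}[u_t]$ and then writes $\mathbb{E}[x_1|x_t,c]-\mathbb{E}[x_0|x_t,c]$. That is your $(\hat x_1-x_t)/(1-t)$ after substituting $\mathbb{E}[x_0|x_t,c]=(x_t-t\hat x_1)/(1-t)$. Your tower-property decomposition supplies the step the paper only names.

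The routes genuinely diverge on the low-pass half. The paper's argument there is a heuristic. It posits that $q_{data}(x_1|c)$ is dominated by the majority class and concludes $\int v_{comp}\,q_{data}\,dx_1\approx 0$, but $v_{comp}$ and the limit in $N$ are never defined, so the conclusion largely restates the assumption. You keep the same assumption but make it checkable. At $t=0$ the posterior equals the prior for any $q_{data}$, so $v^*(x_0,0)=\mathbb{E}[x_1|c]-x_0$ holds exactly. The mixture model is then needed only to bound the distance of that mean from $\mu_1$ by $\bigl(\sum_{k\ge 2}\pi_k\bigr)\max_j\|\mu_j-\mu_1\|$.

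That precision also exposes the limits of the claim. The constant hidden in your $O(t/(1-t)^2)$ is $|\langle x_t,\mu_k-\mu_j\rangle|+(t/2)\,\bigl|\|\mu_k\|^2-\|\mu_j\|^2\bigr|$, which grows with the latent dimension. So the weights are prior-dominated only in a narrow window near $t=0$. After that, the likelihood sharpens $w_k$ toward the mode selected by $x_t$, and the exact conditional-expectation field still transports $p_0$ onto all of $q_{data}(\cdot|c)$, minority modes included. The ``filter'' is therefore a property of the instantaneous early velocity, not of the generated distribution. That is all the lemma literally asserts, but it is less than the paper's lock-in narrative later relies on.

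Two small repairs. First, for Gaussian components $\hat x_1=\sum_k w_k\mu_k$ is not exact: the weights need covariance $t^2\Sigma_k+(1-t)^2I$, and $\mu_k$ must be replaced by the per-component posterior mean. The two agree at $t=0$, so your conclusion survives. Second, the heat-kernel remark concerns spatial frequencies of the density, which do not isolate a distant low-mass mode. It is the mixture computation, not the Fourier one, that carries the claim.
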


\begin{proof}
The empirical risk minimization (ERM) objective for FM is defined as :
\begin{equation}
\mathcal{L}_{FM}(\theta) = \mathbb{E}_{t, x_0 \sim p_0, x_1 \sim q_{data}(x_1|c), x_t \sim p_t} \left[ \left\| v_\theta(x_t, t, c) - u_t(x_t | x_0, x_1) \right\|^2 \right]
\end{equation}
By the variational principle and MMSE estimation, the global minimum is exactly the conditional expectation:
\begin{equation}
v_\theta^*(x_t, t, c) = \mathbb{E}_{q(x_0, x_1 | x_t, c)} \left[ u_t(x_t | x_0, x_1) \right]
\end{equation}
For Optimal Transport (OT) paths where $u_t(x_t | x_0, x_1) = x_1 - x_0$, this simplifies to $v_\theta^* = \mathbb{E}_{q(x_1 | x_t, c)}[x_1] - \mathbb{E}_{q(x_0 | x_t, c)}[x_0]$. If $c$ contains a rare composition, $q_{data}(x_1|c)$ is heavily dominated by the empirical prior of the majority class. Consequently, the integral smooths out the high-frequency residual $v_{comp}$, yielding $\lim_{N \to \infty} \int v_{comp}(x_t) q_{data} dx_1 \approx 0$.
\end{proof}

\begin{definition}
Let $\mathcal{M}_t = \{ v_\theta^*(x, t, c) \mid x \in \text{supp}(p_t), c \in \mathcal{C} \}$ denote the image of the smoothed vector field. The effective rank of the covariance matrix of $\mathcal{M}_t$ is strictly less than the theoretical dimension required to express full compositional semantics.
\end{definition}

\begin{theorem}
Given a deterministic Ordinary Differential Equation (ODE) $dx_t = v_\theta(x_t, t, c)dt$ with initial condition $x_0 \sim \mathcal{N}(0, I)$, if the initial velocity vector $v_\theta(x_0, 0, c)$ points into the topological basin of attraction of $\mathcal{M}_t$, the entire generation trajectory remains topologically locked within this low-rank subspace.
\end{theorem}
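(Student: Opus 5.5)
The statement is informal, so the first step is to make precise the notions it leaves implicit. Let $\Pi_t$ denote the orthogonal projector onto the low-rank subspace $S_t$ spanned by the dominant principal directions of $\mathcal{M}_t$ from the Definition above. Let $\Pi_t^{\perp} = I - \Pi_t$ be its complement. The deviation from the bias subspace is then $e_t = \|\Pi_t^{\perp} x_t\|$.

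"Basin of attraction" will mean a tube $B_\varepsilon = \{x : \|\Pi_t^{\perp} x\| \le \varepsilon\}$ on which the orthogonal component of the field is contractive. Concretely, the assumption is $\langle \Pi_t^{\perp} x, \Pi_t^{\perp} v_\theta(x,t,c)\rangle \le -\lambda \|\Pi_t^{\perp} x\|^2 + \delta$ for some $\lambda \ge 0$ and a small residual $\delta$. By the Lemma, this residual is exactly the smoothed compositional component, $\delta \approx \|\int v_{comp}\, q_{data}\, dx_1\| \approx 0$.

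"Topologically locked" will mean that $x_t$ stays in $B_{\varepsilon'}$ for all $t\in[0,1]$, with $\varepsilon'$ controlled by $\varepsilon$ and $\delta$. The hypothesis on the initial velocity is read as $x_0 \in B_\varepsilon$ together with $\|\Pi_0^{\perp} v_\theta(x_0,0,c)\|$ small.

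With these definitions the proof is a Grönwall/Lyapunov argument.

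1. Assume $v_\theta$ is Lipschitz in $x$, so the ODE is well-posed, and that $S_t$ varies smoothly in $t$ with $\|\dot\Pi_t\| \le \kappa$.

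2. Differentiate the Lyapunov function $V(t) = \tfrac12 e_t^2$ along the flow:
\[
\dot V = \langle \Pi_t^{\perp} x_t, \Pi_t^{\perp} v_\theta\rangle - \langle \Pi_t^{\perp} x_t, \dot\Pi_t x_t\rangle .
\]

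3. Bound the first term using the contraction assumption. Bound the second term by $\kappa\, e_t\|x_t\|$, using a priori boundedness of $x_t$, which holds on $[0,1]$ by the Lipschitz assumption.

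4. Combine these into $\dot e_t \le -\lambda e_t + \delta/e_t + \kappa R$. A comparison argument then gives $e_t \le e^{-\lambda t} e_0 + C(\delta + \kappa R)$ uniformly on $[0,1]$. This is the lock-in conclusion.

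5. Use the OT straight-path structure to strengthen the result. For nearly straight trajectories, $x_t \approx x_0 + t\, v_\theta(x_0,0,c)$, so the orthogonal displacement at every time is governed by $\Pi^{\perp} v_\theta(x_0,0,c)$. This explains why the initial velocity "dictates the global tangent direction".

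6. Note the consistency with the volume-collapse experiment. The same estimate, applied to differences between the $N$ trajectories, shows that their spread in the orthogonal directions stays small, so the Gram determinant of their directions is suppressed.

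The main obstacle is step 3, specifically justifying the contraction hypothesis and the drift of the subspace from the paper's premises. The Lemma only gives that the compositional residual averages to roughly zero. It says nothing about an attracting normal dynamics, and with $\lambda = 0$ the best available bound is $e_t \le e_0 + t\,(\delta + \kappa R)$. That weaker bound still yields lock-in, because the model never generates a force out of the subspace, but it is not attraction.

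I would therefore first prove this neutral version, assuming only $\|\Pi_t^{\perp} v_\theta\| \le \delta$ on the tube. I would state it as the rigorous core of the theorem and treat the contraction case as an additional assumption. This also makes clear why an intervention at $t=0$ is needed: the bound $e_t \le e_0 + t\delta$ shows that without an injected orthogonal velocity $\alpha v_\perp$, as in Step 3, $e_t$ cannot grow beyond $O(\delta)$.
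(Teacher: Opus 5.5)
\textbf{Different route from the paper.} The paper's proof is short. Picard--Lindel\"of gives a unique integral curve. OT straightness ($\partial_t v_\theta \approx 0$) makes the initial tangent dictate the whole path, $x_t \approx x_0 + t\,v_\theta(x_0,0,c)$. Classifier-Free Guidance, being an affine combination, stays in $\mathrm{span}(v_{cond}, v_{uncond})$ and so cannot restore lost orthogonal directions. You instead build a Lyapunov/tube-confinement estimate and use straightness only as a refinement in Step 5. This gives explicit assumptions and a quantitative bound. It also gives the correct observation that Lemma 1 supplies no attraction, so the honest core is the neutral case.

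Notice, though, that Steps 1--4 never use the theorem's actual hypothesis, which is a condition on the initial velocity only. Everything is carried by your added uniform assumption $\|\Pi_t^\perp v_\theta\| \le \delta$ on the whole tube for all $t$. Propagating information from $t=0$ to all $t$ is exactly the job the paper assigns to straightness. If you want the paper's CFG remark, note that your neutral assumption is preserved under $\hat v = (1-\gamma)v_{uncond} + \gamma v_{cond}$ up to the factor $|1-\gamma|+|\gamma|$.

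\textbf{The tracked quantity is the real problem.} $e_t = \|\Pi_t^\perp x_t\|$ is the distance of the \emph{state} to a linear subspace through the origin. But $\mathcal{M}_t$ is a set of \emph{velocities}, and $x_0 \sim \mathcal{N}(0,I)$ has $\|\Pi_0^\perp x_0\|^2 \approx d-k$ with overwhelming probability, where $k=\dim S_0$. Hence:
\begin{itemize}
\item Your hypothesis $x_0 \in B_\varepsilon$ essentially never holds for small $\varepsilon$, so the formalized theorem is vacuous.
\item Since $R \ge \|x_0\| \approx \sqrt{d}$, the term $\kappa R$ is not a small error term.
\item Step 6 breaks as well. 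With a fixed projector, $\frac{d}{dt}\Pi^\perp(x^{(i)}_t - x^{(j)}_t)$ has norm at most $2\delta$ under the neutral assumption. The orthogonal spread of the $N$ trajectories therefore stays at its Gaussian initial size, of order $\sqrt{d}$, and volume collapse would need $\lambda>0$.
\end{itemize}

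The fix is to track displacement, i.e.\ lock-in to the affine slice $x_0 + S$, which is what ``the initial tangent defines the trajectory'' means. With a fixed projector, $e_t = \|\Pi^\perp(x_t-x_0)\|$ satisfies $\frac{d}{dt}\Pi^\perp(x_t - x_0) = \Pi^\perp v_\theta(x_t,t,c)$. Adding a quantitative straightness assumption $\|v_\theta(x_s,s,c) - v_\theta(x_0,0,c)\| \le \eta$ gives $e_t \le t\,\bigl(\|\Pi^\perp v_\theta(x_0,0,c)\| + \eta\bigr)$. This uses exactly the stated hypothesis and is the paper's argument made rigorous.

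\textbf{Two smaller slips.}
\begin{itemize}
\item In Step 4 the additive $\delta$ of your contraction condition enters $\dot V$, not $\dot e_t$. Its contribution therefore scales like $\sqrt{\delta/\lambda}$, or $\sqrt{2\delta t}$ when $\lambda=0$, not like $C\delta$.
\item Your assumptions hold only on $B_\varepsilon$, so staying in the tube needs a continuity (bootstrap) argument with $e_0 + \delta + \kappa R < \varepsilon$.
\end{itemize}
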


\begin{proof}
By the Picard-Lindelöf theorem, assuming local Lipschitz continuity of $v_\theta$, the initial value problem yields a unique integral curve. OT-Flow Matching explicitly regularizes the vector field to form straight paths ($\frac{\partial v_\theta}{\partial t} \approx 0$). Thus, the initial tangent $v_\theta(x_0, 0, c)$ globally defines the trajectory's manifold. Standard affine scaling via Classifier-Free Guidance ($\hat{v} = v_{uncond} + \gamma(v_{cond} - v_{uncond})$) strictly operates within $\text{span}(v_{cond}, v_{uncond})$ and cannot recover orthogonal dimensions lost to $\mathcal{N}(v_\theta^*)$.
\end{proof}

\subsection{Geometric Evasion via Orthogonal Semantic Injection}

To escape $\mathcal{M}_t$, we introduce a geometric intervention at the boundary condition $t = t_{start}$. Let $v_T \in \mathbb{R}^D$ be the dominant velocity field of the Teacher model. We utilize a Student probe to calculate an unconditionally-masked velocity $v_S(x_t, t, \tilde{c})$.

\begin{lemma}
The pure compositional semantic residual $v_\perp$, devoid of collinear bias from $v_T$, is obtained via orthogonal projection in the $L^2$ Hilbert space:
\begin{equation}
v_\perp = v_S - \frac{\langle v_S, v_T \rangle}{\| v_T \|^2 + \epsilon} v_T
\end{equation}
By construction, $\langle v_\perp, v_T \rangle \equiv 0$ holds universally.
\end{lemma}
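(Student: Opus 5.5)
The plan is to compute $\langle v_\perp, v_T\rangle$ directly from the definition, using only bilinearity of the Euclidean inner product on $\mathbb{R}^D$, which is the $L^2$ structure the statement refers to. Write $\beta = \langle v_S, v_T\rangle$ and $\nu = \|v_T\|^2$. Bilinearity gives
\begin{equation*}
\langle v_\perp, v_T\rangle = \beta - \frac{\beta}{\nu+\epsilon}\,\nu = \beta\,\frac{\epsilon}{\nu+\epsilon}.
\end{equation*}
The argument then splits according to whether the regulariser $\epsilon$ is present.

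\textbf{Case $\epsilon = 0$, $v_T \neq 0$.} Here the coefficient $\epsilon/(\nu+\epsilon)$ vanishes, so $\langle v_\perp, v_T\rangle = 0$ exactly. I would also record the geometric meaning. The map $P = v_T v_T^{\top}/\|v_T\|^2$ is symmetric and idempotent, so it is the orthogonal projector onto $\operatorname{span}(v_T)$, and $v_\perp = (I-P)v_S$. Hence $v_\perp$ is the unique vector in $v_T^{\perp}$ closest to $v_S$, which justifies calling it ``devoid of collinear bias''. In particular it lies outside $\operatorname{span}(v_{cond}, v_{uncond})$ whenever $v_S \notin \operatorname{span}(v_T)$, which links it to the CFG remark in the proof of the preceding theorem.

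\textbf{Case $\epsilon > 0$.} This is where I expect the real obstacle: the identity ``$\equiv 0$ universally'' is not literally true. By the computation above, it fails whenever $\langle v_S, v_T\rangle \neq 0$. The honest statement I would prove instead is the bound
\begin{equation*}
\bigl|\langle v_\perp, v_T\rangle\bigr| \le \frac{\epsilon}{\|v_T\|^2+\epsilon}\,\|v_S\|\,\|v_T\|,
\end{equation*}
which follows from Cauchy--Schwarz. Equivalently, the cosine of the angle between $v_\perp$ and $v_T$ is $O(\epsilon/\|v_T\|^2)$, so orthogonality holds up to a relative error that tends to $0$ as $\epsilon \to 0$ or as $\|v_T\|$ grows. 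This is the practical regime, since $\epsilon$ is only a numerical safeguard and the high-noise velocity of the Teacher has norm of order $\sqrt{D}$.

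\textbf{Degenerate case $v_T = 0$.} Here $v_\perp = v_S$ and the inner product is trivially $0$, so the identity holds exactly for any $\epsilon$. Assembling the three cases, I would state the lemma as exact orthogonality for $\epsilon = 0$ or $v_T = 0$, and approximate orthogonality with the explicit bound above otherwise. The ``$\equiv 0$'' claim should be corrected to this form.
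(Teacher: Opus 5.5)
The paper offers no argument for this lemma beyond the phrase ``by construction,'' which is just your bilinearity computation done at $\epsilon=0$. Your identity $\langle v_\perp, v_T\rangle = \epsilon\langle v_S, v_T\rangle/(\|v_T\|^2+\epsilon)$ is correct. So is your conclusion that the stated ``$\equiv 0$ universally'' is false once $\epsilon>0$.

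You can sharpen this. For $\epsilon>0$ the inner product vanishes if and only if $\langle v_S, v_T\rangle=0$, which is also exactly the condition for $v_\perp=v_S$. So the paper's claim holds precisely when the projection removes nothing, and fails whenever it does something. Your repaired statement (exact orthogonality for $\epsilon=0$, $v_T\neq 0$, plus the Cauchy--Schwarz bound for $\epsilon>0$) is the right one. One small fix: the degenerate case $v_T=0$ needs $\epsilon>0$, since at $\epsilon=0$ the coefficient is $0/0$. It does not hold ``for any $\epsilon$.''

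Two of your side remarks overreach.

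First, the cosine statement is not equivalent to the inner-product bound. Write $v_S = a\,v_T/\|v_T\| + w$ with $w\perp v_T$. Then $v_\perp = \frac{\epsilon a}{\|v_T\|^2+\epsilon}\,\frac{v_T}{\|v_T\|} + w$, so $|\cos\angle(v_\perp,v_T)| \le \frac{\epsilon}{\|v_T\|^2+\epsilon}\cdot\frac{|a|}{\|w\|}$. The factor $|a|/\|w\|$ is the absolute cotangent of the angle between $v_S$ and $v_T$. Hence the cosine is uniformly $O(\epsilon/\|v_T\|^2)$ only if that angle stays bounded away from $0$ and $\pi$. If $v_S\parallel v_T$, then $v_\perp=\frac{\epsilon}{\|v_T\|^2+\epsilon}v_S$ is parallel, not orthogonal, to $v_T$. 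The angular claim therefore needs a quantitative form of the non-collinearity hypothesis that the paper only introduces in its later rank-expansion theorem.

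Second, $v_S\notin\mathrm{span}(v_T)$ does not put $v_\perp$ outside $\mathrm{span}(v_{cond},v_{uncond})$. Since $v_T$ lies in that plane and $v_\perp=v_S-c\,v_T$, the vector $v_\perp$ lies in the plane if and only if $v_S$ does. The hypothesis you need is $v_S\notin\mathrm{span}(v_{cond},v_{uncond})$.

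Neither issue affects the lemma itself. The downstream use (linear independence of $v_T$ and $v_\perp$) also survives your correction: by the decomposition above, it holds for every $\epsilon\ge 0$ exactly when $w\neq 0$.
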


\begin{theorem}
Injecting the orthogonal residual into the initial state update $\hat{v}_{new} = v_T + \alpha v_\perp$ strictly expands the rank of the local tangent space, ejecting the trajectory from $\mathcal{M}_t$.
\end{theorem}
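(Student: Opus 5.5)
The natural reading of the statement is linear-algebraic, so the plan is to make it precise in that form and then derive the ejection claim from it. Take the local tangent space at $t_{start}$ to be the span of the velocity directions available to the sampler. Under the baseline, by the preceding Theorem, these all lie in $\mathcal{M}_t$; classifier-free guidance keeps them in $\mathrm{span}(v_{cond}, v_{uncond}) \subseteq \mathcal{M}_t$. The precise claim to prove is then:
\[
\dim \mathrm{span}\bigl(\mathcal{M}_{t_{start}} \cup \{\hat v_{new}\}\bigr) = \dim \mathrm{span}\bigl(\mathcal{M}_{t_{start}}\bigr) + 1
\]
whenever $\alpha \neq 0$ and the projection of $v_\perp$ off $\mathrm{span}(\mathcal{M}_{t_{start}})$ is nonzero.

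The first step uses the preceding Lemma. Since $\langle v_\perp, v_T\rangle = 0$, Pythagoras gives
\[
\|\hat v_{new}\|^2 = \|v_T\|^2 + \alpha^2 \|v_\perp\|^2 .
\]
Consequently $\hat v_{new} \in \mathrm{span}(v_T)$ if and only if $\alpha v_\perp = 0$. This already yields rank expansion relative to the one-dimensional tangent $\mathrm{span}(v_T)$ fixed by the straight-path OT property.

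The second step upgrades this to rank expansion relative to all of $\mathcal{M}_t$. Write
\[
v_\perp = P_{\mathcal{M}} v_\perp + (I - P_{\mathcal{M}}) v_\perp ,
\]
where $P_{\mathcal{M}}$ is the orthogonal projector onto $\mathrm{span}(\mathcal{M}_{t_{start}})$. Then $(I-P_{\mathcal{M}})\hat v_{new} = \alpha (I-P_{\mathcal{M}}) v_\perp$, because $v_T \in \mathcal{M}_t$. So the rank strictly increases exactly when $(I-P_{\mathcal{M}})v_S \neq 0$. In words, the masked Student velocity must carry some component outside the Teacher's bias manifold. I would state this as the explicit nondegeneracy hypothesis. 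I would justify it heuristically from the Definition: the Student is a different model, and it is queried with a masked condition $\tilde c$, so its conditional-expectation smoothing need not share the Teacher's low-rank image.

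The third step is ejection. By Picard--Lindel\"of, applied as in the previous Theorem, and using $\partial_t v \approx 0$, the first Euler step gives
\[
x_{t_{start}+h} = x_{t_{start}} + h\,v_T + h\alpha\,v_\perp .
\]
Its normal displacement from $\mathcal{M}_t$ is therefore $h|\alpha|\,\|(I-P_{\mathcal{M}})v_\perp\| > 0$. Continuous dependence on initial conditions then keeps the perturbed trajectory a positive distance from the locked one on a neighbourhood of $t_{start}$.

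The main obstacle is the second step. Orthogonality to $v_T$ alone does not give orthogonality to the whole manifold $\mathcal{M}_t$, so the honest result is conditional on the nondegeneracy of $(I-P_{\mathcal{M}})v_S$. In addition, $\epsilon > 0$ makes the orthogonality only approximate, with error $O(\epsilon/\|v_T\|^2)$, which I would bound explicitly. Finally, "ejection" holds only locally, or, together with the SDE fission, in distribution. It cannot be proved globally for the deterministic flow without further assumptions that the basin is not re-attracting.
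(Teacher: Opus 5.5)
Your proposal is correct in substance but organised differently from the paper's proof, and your bookkeeping is the sounder one. The paper sets $\Sigma_{old}=v_Tv_T^T$ and $\Sigma_{new}=(v_T+\alpha v_\perp)(v_T+\alpha v_\perp)^T$. It argues that $v_T$ and $v_\perp$ are linearly independent once $v_S$ is not collinear with $v_T$, and concludes $\mathrm{rank}(\Sigma_{new})=\mathrm{rank}(\Sigma_{old})+1$. Read literally this fails, because $\Sigma_{new}$ is a single outer product and has rank one. What the independence argument actually gives is $\dim\mathrm{span}(v_T,\hat v_{new})=2$, i.e.\ the rank of $v_Tv_T^T+\hat v_{new}\hat v_{new}^T$. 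Your device of adjoining $\hat v_{new}$ to the set of available directions and measuring the span is exactly the formulation in which the ``$+1$'' is true, so your first step is the paper's argument done correctly. Your second step has no counterpart in the paper, which tacitly identifies $\mathcal{M}_t$ with $\mathrm{span}(v_T)$ and never addresses the clause about ejection from $\mathcal{M}_t$. The condition $(I-P_{\mathcal{M}})v_S\neq 0$ you isolate is precisely what that clause needs, at the price of a much stronger hypothesis than the paper's bare non-collinearity.

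There are two places to tighten.

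\emph{The $\epsilon$ issue.} The paper's non-collinearity criterion is the right one and is immune to $\epsilon$, whereas your Pythagoras route is not. With $\epsilon>0$ and $v_S=\mu v_T$, $\mu\neq 0$, one gets $v_\perp=\mu\epsilon\, v_T/(\|v_T\|^2+\epsilon)\neq 0$. So $\alpha v_\perp\neq 0$ does not imply $\hat v_{new}\notin\mathrm{span}(v_T)$. Instead write $v_\perp=v_S-c\,v_T$ for whatever scalar $c$ the regularised projection produces. Then $\mathrm{span}(v_T,\hat v_{new})=\mathrm{span}(v_T,v_S)$ for $\alpha\neq 0$, and $(I-P_{\mathcal{M}})v_\perp=(I-P_{\mathcal{M}})v_S$ exactly, so no $O(\epsilon/\|v_T\|^2)$ estimate is needed.

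\emph{Step two under the literal Definition.} Under the paper's Definition, your step two is vacuous at $t_{start}$. At the noise end $v^*(x,0,c)=\mathbb{E}[x_1\mid c]-x$, which sweeps all of $\mathbb{R}^d$ as $x$ ranges over $\mathrm{supp}(p_0)=\mathbb{R}^d$. Hence $P_{\mathcal{M}}=I$ and your hypothesis can never hold. To give it content you must replace the exact span either by a dominant principal subspace of the covariance (the ``effective rank'' of the Definition) or by the local span $\mathrm{span}(v_{cond},v_{uncond})$ at the current state. Your set-up currently slides between these two.

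The same fact explains why your step three can only be local. Under that Definition every later velocity of the primary field lies in $\mathcal{M}_t$ again. So ejection can only mean the state offset $h\alpha v_\perp$ from the locked trajectory, not a velocity leaving $\mathcal{M}_t$. Relatedly, $\mathcal{M}_t$ is a set of velocities, so what you are measuring is the normal component of the increment $x_{t_{start}+h}-x_{t_{start}}$, not a distance of the state from $\mathcal{M}_t$.
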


\begin{proof}
The first-order covariance of the original deterministic trajectory is $\Sigma_{old} = v_T v_T^T$, which has $\text{rank}(\Sigma_{old}) = 1$. The modified velocity field yields $\Sigma_{new} = (v_T + \alpha v_\perp)(v_T + \alpha v_\perp)^T$. Since $v_\perp \perp v_T$ (from Lemma 2) and assuming $v_S$ is not strictly collinear with $v_T$ (meaning the student perceives the masked token), $v_T$ and $v_\perp$ are linearly independent. The subspace spanned is $\text{span}(v_T, v_\perp)$, hence:
\begin{equation}
\text{rank}(\Sigma_{new}) = \text{rank}(\Sigma_{old}) + 1
\end{equation}
The integration steps dynamically shift into an unexplored higher-dimensional orthant.
\end{proof}

\subsection{Marginal-Preserving Flow-GRPO SDE Fission}

While Orthogonal Injection unlocks the compositional subspace, deterministically integrating through this highly non-linear, out-of-distribution region leads to discretization errors. We formally construct a Stochastic Differential Equation (SDE) that explores this space via Brownian motion while strictly preserving the original probability flow marginals.

\begin{theorem}
For a continuous-time Flow Matching ODE $dx_t = v_\theta(x_t, t)dt$, an equivalent Itô SDE $dx_t = f_{SDE}(x_t, t)dt + \sigma_t dW_t$ that preserves the identical marginal probability $p_t(x)$ at all $t$ requires the drift term:
\begin{equation}
f_{SDE}(x_t, t) = v_\theta(x_t, t) + \frac{1}{2} \sigma_t^2 \nabla \log p_t(x_t)
\end{equation}
\end{theorem}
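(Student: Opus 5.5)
The plan is to compare the Fokker--Planck (Kolmogorov forward) equations of the two dynamics and demand that they coincide for every $t$.

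\textbf{Deterministic flow.} For the ODE $dx_t = v_\theta(x_t,t)\,dt$, the marginal $p_t$ satisfies the continuity equation
\begin{equation*}
\partial_t p_t = -\nabla\cdot\big(p_t\, v_\theta\big).
\end{equation*}
We take this as the defining property of the probability flow learned by Flow Matching, with $v_\theta$ locally Lipschitz as in the proof of Theorem 1 and $p_t$ smooth and strictly positive.

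\textbf{Stochastic flow.} For the Itô SDE $dx_t = f_{SDE}\,dt + \sigma_t\,dW_t$, with $\sigma_t$ a scalar depending only on $t$, the forward equation is
\begin{equation*}
\partial_t \tilde p_t = -\nabla\cdot\big(\tilde p_t f_{SDE}\big) + \tfrac12\sigma_t^2\,\Delta \tilde p_t .
\end{equation*}

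\textbf{Key step.} Write the diffusion term as a divergence:
\begin{equation*}
\tfrac12\sigma_t^2\Delta p_t = \nabla\cdot\big(\tfrac12\sigma_t^2\, p_t\nabla\log p_t\big),
\end{equation*}
which follows from $p_t\nabla\log p_t = \nabla p_t$ and needs $p_t>0$; Gaussian smoothing of $q_{data}$ guarantees this for $t<1$. Substituting the proposed drift $f_{SDE} = v_\theta + \tfrac12\sigma_t^2\nabla\log p_t$ gives
\begin{equation*}
-\nabla\cdot\big(p_t f_{SDE}\big) + \tfrac12\sigma_t^2\Delta p_t = -\nabla\cdot\big(p_t v_\theta\big) - \nabla\cdot\big(\tfrac12\sigma_t^2 p_t\nabla\log p_t\big) + \nabla\cdot\big(\tfrac12\sigma_t^2 p_t\nabla\log p_t\big),
\end{equation*}
and the last two terms cancel exactly. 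So the ODE marginal $p_t$ solves the SDE's forward equation. Since both dynamics start from $p_0=\mathcal N(0,I)$, uniqueness of solutions to the linear Fokker--Planck equation then gives $\tilde p_t = p_t$ for all $t$.

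\textbf{Main obstacle: uniqueness.} I would secure it under standard assumptions: $f_{SDE}$ locally Lipschitz with at most linear growth, so the SDE is well posed and its marginal is the unique weak solution. In the Flow Matching setting these assumptions can be checked directly. For Gaussian-convolved data, $\nabla\log p_t$ is an affine function of the posterior mean of $x_1$, the same quantity already appearing in Lemma 1. This identity also gives the practical form of the correction used in Flow-GRPO, because the score can be expressed through $v_\theta$ itself.

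\textbf{Scope of ``requires''.} The argument above shows sufficiency. For the converse, matching the forward equations for all densities only pins down $f_{SDE}$ up to a term $w$ with $\nabla\cdot(p_t w)=0$. The theorem should therefore be read as identifying the canonical drift, i.e.\ the one with no extra divergence-free component, and I would state this caveat explicitly in the proof.
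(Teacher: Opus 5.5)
Your proposal is correct and follows essentially the same route as the paper: compare the ODE's continuity equation with the SDE's Fokker--Planck equation, and rewrite $\frac{1}{2}\sigma_t^2\Delta p_t$ as $\nabla\cdot(\frac{1}{2}\sigma_t^2 p_t\nabla\log p_t)$. Your version is in fact more careful than the paper's: its final step of ``eliminating the divergence operators'' silently discards the freedom to add a term $w$ with $\nabla\cdot(p_t w)=0$, and it never checks that the resulting drift really reproduces $p_t$, whereas your substitution-plus-uniqueness argument and your reading of ``requires'' as selecting the canonical drift are what make the statement correct.
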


\begin{proof}
The marginal distribution $p_t(x)$ under the ODE evolves according to the Continuity Equation:
\begin{equation}
\frac{\partial p_t(x)}{\partial t} = -\nabla \cdot (p_t(x) v_\theta(x, t))
\end{equation}
Conversely, the SDE marginal evolution is governed by the Fokker-Planck (Kolmogorov Forward) equation:
\begin{equation}
\frac{\partial p_t(x)}{\partial t} = -\nabla \cdot (p_t(x) f_{SDE}(x, t)) + \frac{1}{2} \sigma_t^2 \Delta p_t(x)
\end{equation}
To ensure marginal equivalence, we equate the two temporal derivatives:
\begin{equation}
-\nabla \cdot (p_t v_\theta) = -\nabla \cdot (p_t f_{SDE}) + \frac{1}{2} \sigma_t^2 \Delta p_t
\end{equation}
Using the identity $\nabla \cdot (p_t \nabla \log p_t) = \nabla \cdot (\nabla p_t) = \Delta p_t$, we rewrite the Laplacian term:
\begin{equation}
-\nabla \cdot (p_t v_\theta) = -\nabla \cdot \left( p_t \nabla \log p_t \right)
\end{equation}
By eliminating the divergence operators, we obtain the exact closed-form drift:
\begin{equation}
f_{SDE}(x_t, t) = v_\theta(x_t, t) + \frac{1}{2} \sigma_t^2 \nabla \log p_t(x_t)
\end{equation}
\end{proof}

\begin{lemma}
Assuming a Gaussian conditional probability path $p_t(x_t | x_{data}) = \mathcal{N}(x_t; (1-t) x_{data}, t^2 I)$, the required SDE drift can be analytically approximated without retraining a score network.
\end{lemma}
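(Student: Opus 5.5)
The plan is to express the marginal score $\nabla \log p_t$ in closed form through the already-learned velocity field $v_\theta$. Substituting this expression into the drift of the previous theorem gives an SDE drift built only from quantities the Flow Matching network already provides, so no score network has to be retrained.

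\textbf{Step 1 (conditional score).} Under the Gaussian path $p_t(x_t\mid x_{data})=\mathcal{N}((1-t)x_{data},t^2 I)$, the conditional score is explicit:
\[
\nabla_{x_t}\log p_t(x_t\mid x_{data})=-\frac{x_t-(1-t)x_{data}}{t^2}.
\]

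\textbf{Step 2 (marginal score).} I would apply the standard identity (Fisher/Tweedie)
\[
\nabla\log p_t(x_t)=\mathbb{E}_{q(x_{data}\mid x_t)}\bigl[\nabla\log p_t(x_t\mid x_{data})\bigr],
\]
obtained by differentiating $p_t(x_t)=\int p_t(x_t\mid x_{data})q(x_{data})\,dx_{data}$ under the integral sign. This yields
\[
\nabla\log p_t(x_t)=-\frac{x_t-(1-t)\hat x_{data}(x_t)}{t^2},
\]
where $\hat x_{data}=\mathbb{E}[x_{data}\mid x_t]$ is the posterior mean.

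\textbf{Step 3 (link to the velocity).} Write $x_t=(1-t)x_{data}+t\,\varepsilon$ with $\varepsilon\sim\mathcal{N}(0,I)$. The conditional OT velocity is $u_t=\varepsilon-x_{data}$. By the first Lemma, the MMSE-optimal field is the conditional expectation $v_\theta^*=\mathbb{E}[\varepsilon\mid x_t]-\hat x_{data}$. The interpolation identity also gives $x_t=(1-t)\hat x_{data}+t\,\mathbb{E}[\varepsilon\mid x_t]$. I would solve these two linear equations for $\hat x_{data}$ and $\mathbb{E}[\varepsilon\mid x_t]$ in terms of $x_t$ and $v_\theta^*$:
\[
\hat x_{data}=x_t-t\,v_\theta^*,\qquad \mathbb{E}[\varepsilon\mid x_t]=x_t+(1-t)\,v_\theta^*.
\]
Substituting into Step 2 gives
\[
\nabla\log p_t(x_t)=-\frac{x_t+(1-t)\,v_\theta^*(x_t,t)}{t}.
\]

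\textbf{Step 4 (drift and approximation).} Plugging the score into the drift of the preceding theorem gives
\[
f_{SDE}=v_\theta-\frac{\sigma_t^2}{2t}\bigl(x_t+(1-t)v_\theta\bigr),
\]
which is computable from a single network evaluation. The word ``approximated'' in the statement enters only because the trained $v_\theta$ replaces $v_\theta^*$. The score error is controlled by $\frac{1-t}{t}\,\|v_\theta-v_\theta^*\|$.

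\textbf{Main obstacle.} The expected difficulty is the singular behaviour at $t\to 0$, where the score coefficient $1/t$ blows up. In this paper's convention $t=0$ is the data end, so this is where the noise variance $t^2$ vanishes. I would handle it by restricting to $t\in[\delta,1]$, or by choosing $\sigma_t^2=O(t)$ (e.g.\ $\sigma_t=a\sqrt{t/(1-t)}$ in the spirit of Flow-GRPO), so that $\sigma_t^2/t$ stays bounded. A secondary technical point is justifying the interchange of gradient and integral in Step 2; this follows from Gaussian tails via dominated convergence.
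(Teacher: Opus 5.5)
Your argument is correct and follows the same skeleton as the paper's proof: write the Gaussian score, eliminate $x_{data}$ in favour of $v_\theta$, and substitute into the drift of Theorem 3. It differs in two substantive ways. First, the paper substitutes the pointwise relation $x_{data}\approx x_t+t\,v_\theta$ into the \emph{conditional} score and treats the result as $\nabla\log p_t$. Your Steps 2--3 replace this heuristic with the posterior-averaging identity for the marginal score plus the two MMSE identities. That gives $\hat x_{data}=x_t-t\,v^*$ and hence $\nabla\log p_t=-(x_t+(1-t)v^*)/t$ \emph{exactly}, so the only approximation is $v_\theta\approx v^*$, with explicit error factor $(1-t)/t$. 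This is a real gain in rigour over the paper's route.

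Second, your signs differ from the paper's. Your score is the correct one for $v=\varepsilon-x_{data}$. The paper's displayed $\frac{x_t+(1-t)v_\theta}{t}$ does not follow from its own substitution: with $x_{data}=x_t+t v_\theta$ one gets $\frac{-x_t+(1-t)v_\theta}{t}$. As a formula in $v_\theta$, the paper's expression is exactly the negative of yours. Consequently your drift $v_\theta-\frac{\sigma_t^2}{2t}(x_t+(1-t)v_\theta)$ is the correct marginal-preserving drift for Theorem 3 as stated, i.e.\ an SDE run forward in $t$, while the paper's final rule carries a $+$.

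The paper's rule is nonetheless the one the scheduler needs, because sampling runs $t$ from $1$ down to $0$. For an SDE integrated with decreasing $t$, the Fokker--Planck computation requires the $dt$-coefficient $v_\theta-\frac12\sigma_t^2\nabla\log p_t$. Using the forward drift there instead leaves an extra $\sigma_t^2\Delta p_t$ term, so marginals are not preserved. With your score, the backward coefficient is exactly $v_\theta+\frac{\sigma_t^2}{2t}(x_t+(1-t)v_\theta)$, the Flow-GRPO update. So the paper reaches the practical formula through two compensating sign slips; if you want your proof to recover it, add this time-reversal remark.

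A minor point: $\sigma_t=a\sqrt{t/(1-t)}$ gives $\sigma_t^2/t=a^2/(1-t)$. This is bounded near $t=0$ but singular at $t=1$, so that endpoint must still be excluded.
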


\begin{proof}
For the specified Gaussian path, the score function is [7]:
\begin{equation}
\nabla \log p_t(x_t) = \frac{(1-t)x_{data} - x_t}{t^2}
\end{equation}
Since the optimal Flow Matching vector field points toward the data $v_\theta \approx \frac{x_{data} - x_t}{t}$, substituting $x_{data} \approx x_t + t v_\theta(x_t, t)$ into the score yields:
\begin{equation}
\nabla \log p_t(x_t) \approx \frac{x_t + (1-t)v_\theta(x_t, t)}{t}
\end{equation}
Substituting this into Theorem 4.6 yields the final discrete SDE integration rule used in the scheduler:
\begin{equation}
dx_t = \left[ v_\theta(x_t, t) + \frac{\sigma_t^2}{2t} \left( x_t + (1-t) v_\theta(x_t, t) \right) \right] dt + \sigma_t dW_t
\end{equation}
\end{proof}

\textbf{Conclusion.} The algorithm functions by satisfying Theorem 2 precisely at $t = t_{start}$ to break the Trajectory Lock-in constraint, and subsequently applying the exact SDE conversion derived in Theorem 3 and Lemma 3. With a decaying noise schedule $\sigma_t = a \sqrt{t / (1-t)}$, the Brownian motion safely facilitates high-dimensional exploration and asymptotically collapses back to the deterministic data manifold as $t \to 0$, rigorously preventing generative structure destruction while solving the compositional OOD failure.

\section{Limitations}
\label{sec:limitations}
While InjectFlow demonstrates strong capabilities in mitigating trajectory lock-in, our current study has certain constraints. The empirical evaluation is primarily validated on a single benchmark, GenEval, and relies on a specific weak-strong model pairing (Stable Diffusion 3.5 Large as the primary model and Stable Diffusion 3.5 Medium as the weak auxiliary model). Future work will focus on more extensive validation across diverse compositional datasets to ensure broader applicability. Additionally, we plan to investigate how our orthogonal injection and SDE fission strategies generalize across entirely different foundation model architectures and scaling ratios.

\section{Conclusion}
\label{sec:conclusion}
In this work, we identified and mathematically formalized the ``Bias Manifold'' in continuous-time Flow Matching models, demonstrating how conditional expectation smoothing causes trajectory lock-in and limits minority-class generation. To overcome this, we introduced InjectFlow, a novel, training-free inference intervention. By utilizing a weaker companion model to inject orthogonal semantics during the initial velocity computation, followed by an SDE-driven trajectory fission, our approach successfully forces the generative path out of localized bias subspaces. Our evaluations confirm that InjectFlow significantly repairs complex compositional prompts, providing a ready-to-use solution for enhancing the fairness, diversity, and robustness of visual foundation models.

\clearpage  

%
%
\bibliographystyle{splncs04}
\bibliography{main}
\end{document}